\newtheorem{theorem}{Theorem}[section]
\newtheorem{example}{Example}[section]
\newtheorem{lemma}[theorem]{Lemma}
\newtheorem{definition}[theorem]{Definition}
\title{Network Interdiction Goes Neural}
\author{%
  Lei Zhang\\
  Department of Computer Science\\
  Virginia Tech\\
  \texttt{zhanglei@vt.edu} \\
  \And
  Zhiqian Chen \\
  Department of Computer Science and Engineering \\
  Mississippi State University \\
  \texttt{zchen@cse.msstate.edu} \\
  \AND
  Chang-Tien Lu \\
  Department of Computer Science\\
  Virginia Tech\\
  \texttt{clu@vt.edu} \\
  \And
  Liang Zhao \\
  Department of Computer Science \\
  Emory University \\
  \texttt{liang.zhao@emory.edu} \\
}
\begin{document}

\maketitle

\begin{abstract}

Network interdiction problems are combinatorial optimization problems involving two players: one aims to solve an optimization problem on a network, while the other seeks to modify the network to thwart the first player's objectives.
Such problems typically emerge in an attacker-defender context, encompassing areas such as military operations, disease spread analysis, and communication network management.
The primary bottleneck in network interdiction arises from the high time complexity of using conventional exact solvers and the challenges associated with devising efficient heuristic solvers.
GNNs, recognized as a cutting-edge methodology, have shown significant effectiveness in addressing single-level CO problems on graphs, such as the traveling salesman problem, graph matching, and graph edit distance. Nevertheless, network interdiction presents a bi-level optimization challenge, which current GNNs find difficult to manage.
To address this gap, we represent network interdiction problems as Mixed-Integer Linear Programming (MILP) instances, then apply a multipartite GNN with sufficient representational capacity to learn these formulations. This approach ensures that our neural network is more compatible with the mathematical algorithms designed to solve network interdiction problems, resulting in improved generalization. Through two distinct tasks, we demonstrate that our proposed method outperforms theoretical baseline models and provides advantages over traditional exact solvers.
\end{abstract}

\section{Introduction}

In recent years, graph neural networks (GNNs) have exhibited promise in combinatorial optimization (CO) problems, with graphs being the preferred representation due to the discrete nature of most CO problems and the ubiquity of network data in real-world applications. 
For example, GNNs have also demonstrated effectiveness in CO problems such as Traveling Salesman Problem (TSP) \cite{prates2019learning}, graph matching \cite{Fey/etal/2020, li2019graph},  and graph edit distance \cite{bai2020learning}. 
GNNs' inductive bias makes them well-suited for encoding graph-structured data, benefiting from permutation invariance and sensitivity to input sparsity. 
Efforts have also been made to understand the limitations and underlying mechanisms of GNNs in CO problems \cite{bother2022s, Xu2020What, NEURIPS2022_8248b1de}, and to integrate GNNs with classical methods like beam search \cite{joshi2019efficient}  or tree search  \cite{li2018combinatorial} for improved performance.

Beyond the realm of CO problems where the optimization is unilateral, real-world scenarios often involve adversarial settings requiring \textit{two} levels of optimization. For instance, in a power grid system, it is not only important to design the network for maximum throughput (single level) but also to identify critical points where a compromise on the design of the network would cause the most significant damage (bi-level). This situation represents a more generalized CO problem with two competing roles: the \textit{follower} aims to solve a CO problem on a fixed network, while the \textit{leader} attempts to modify the network itself in a manner conflicting with the follower's objectives. This generic scenario is known as \textit{network interdiction}.

Traditional network interdiction solvers are commonly categorized as \textit{exact solvers} and \textit{heuristic solvers} \cite{smith2020survey}. Exact solvers aim to identify optimal solutions, necessitating the resolution of complex mathematical problems, typically NP-hard, which involves exploring a combinatorial space of potential network modifications and assessing their impact on the follower's solution \cite{cormican1995computational, wood1993deterministic}. As the network size increases, the exponential growth in the number of potential modifications results in computational complexity that exceeds polynomial time bounds. 
Conversely, heuristic solvers are algorithms designed to efficiently find good, if not optimal, solutions more quickly than exact methods. However, due to the complexity and variability of these problems—such as diverse assumptions, a vast solution space, and inherent problem intricacies—it's uncommon for a pure heuristic solver to be effective. Instead, combining heuristic modules with traditional algorithms offers greater promise \cite{smith2020survey}. GNN-based machine learning methods, which have shown effectiveness in conventional unilateral CO problems\cite{ijcai2021p595}, are clearly strong candidates for serving as heuristic modules in solving network interdiction problems more efficiently.

While it may seem intuitive to employ GNNs for addressing network interdiction problems, their application in this domain remains underexplored due to several challenges.
\textit{Firstly, there is no effective representation method for network interdiction instances.} For a GNN to learn to solve network interdiction problems, it must be able to encode all relevant information, including that of both the leader and the follower, in an attributed graph, and distinguish between instances with different solutions. 
\textit{Secondly, there is no theoretical guarantee that GNNs can be trained to perform algorithmically and generalize in solving network interdiction problems.}
Recent studies have shown that GNNs excel in certain CO tasks because they ``align'' with dynamic programming (DP) \cite{Xu2020What}. Given that DP or analogous polynomial-time heuristics can solve numerous CO problems, GNNs offer potential for generalization and extrapolation in these areas \cite{NEURIPS2022_8248b1de}. Unfortunately, network interdiction problems do not align with DP, making previous GNN for CO unable to be trivially used for network interdiction.

This paper proposes a new computational framework to solve the network interdiction problem. To do so, we answer two key questions: 1) \textit{Can GNNs provide effective representation for network interdiction problems, and if so, what theoretical assurances exist?} 2) \textit{If the alignment between DP and GNN does not justify the utilization of GNNs for network interdiction problems, what alternative rationale does? }
To answer these two questions, we leverage existing research on the expressive capabilities of GNNs while accounting for the unique attributes of network interdiction problems and their conventional solution methodologies. Specifically, we employ a multipartite graph structure to comprehensively represent network interdiction problems and develop a novel GNN model tailored for multipartite graphs, termed MMILP-GNN. This proposed framework demonstrates that the MMILP-GNN model, supplemented with the random feature trick, offers sufficient representation power for network interdiction problems. Furthermore, by design, the framework aligns with traditional algorithms used to solve network interdiction problems, thereby not only fitting the training data but also demonstrating a degree of generalization ability.

\section{Preliminaries}

A network interdiction problem typically includes two players engaging in a game of max-min or min-max on a defined graph. One player, commonly referred to as the follower or defender, aims to optimize a standard CO problem on the graph, such as identifying the shortest path or maximizing the maximum flow between two nodes. The other player, often known as the leader or attacker, manipulates the network on which the follower operates, strategically disrupting the follower's objective by actions like removing edges that should be connected in the graph or adding costs to existing edges. Formally, we define an instance of a network interdiction problem as follows. 

\begin{definition} [Network Interdiction Problem]\label{def:ni}
The general form of a max-min network interdiction problem is defined as:
\begin{equation}
    \max \Theta(\mathbf{x})  \quad \quad  s.t. \quad \mathbf{x} \in X,
\end{equation}
where $\mathbf{x}$ represents the leader/attacker's variable. The objective function of the interdiction $\Theta(\mathbf{x})$ is defined as the minimization of another function: 
\begin{equation}
    \Theta(\mathbf{x})  = \min f(\mathbf{x}, \mathbf{y}) \quad \quad   s.t. \quad \mathbf{y} \in Y(\mathbf{x}),
\end{equation}
where $\mathbf{y}$ represents the follower/defense variables,  $f(\mathbf{x}, \mathbf{y})$ represents the follower/defender's objective function (affected by the leader/attacker's action $\mathbf{x}$), and $Y(\mathbf{x})$ is the set of feasible actions that the follower/defender can do for a given $\mathbf{x}$. 

A min-max network interdiction problem is essentially the reverse of the max-min network interdiction problem, where the outer objective is now minimization, and the inner objective is maximization.
\end{definition}

\begin{example} [Shortest Path Interdiction]\label{exp:test}

\begin{equation}\small
\begin{split}
    \max_{\mathbf{x} \in X}  \{ \min_{\mathbf{y}\in Y(\mathbf{x})} \sum_{(i,j)\in A} (c_{i,j}+d_{i,j}x_{i,j})y_{i,j} \} \quad 
    \textrm{s.t.} \quad   x \in \{ 0, 1\}^{|A|} : \sum_{(i,j)\in A} x_{i,j}\leq \gamma,  \quad T\textbf{y} = \textbf{b} ,\quad  y\geq 0 ,
\end{split} \label{exp:1}
\end{equation}

\end{example}

Eq. (\ref{exp:1}) illustrates the mathematical formulation of the shortest-path network interdiction problem. In this context,  $\mathbf{x} = \{ x_{i,j}\}_{(i,j)\in E}$ is a binary decision vector indicating interdicted edges, with $E$ being the network's edge set. Here, $c_{i,j}$ denotes the length of the edge, $d_{i,j}$ represents the additional length if edge $(i,j)$ is interdicted, $y(i,j)$ indicates whether the edge exists in the network,  and $ \gamma$ is the budget constraint on the number of interdictions. 

\textbf{Traditional exact solvers}: Branch-and-Bound (BB) and  Benders decomposition are two fundamental traditional exact methods for solving network interdiction problems.  Compared to Benders decomposition, BB is more versatile and applicable to a wide range of network interdiction problems \cite{du2013handbook}. The problems that BB can handle are normally defined in MILPs, which can be formulated as 
\begin{equation} \small 
    \min_{x \in \mathbb{R}^n} c^Tx, \quad \textrm{s.t.}; Ax \circ b, l \leq x \leq u, x_j \in \mathbb{Z}, \forall j \in I .\label{eq:milp}
\end{equation}
For general MILPs, the worst-case complexity can be similarly exponential, as the algorithm may need to consider an exponential number of subproblems in the search space. Efforts have been made to represent general LPs and MILPs using a bipartite graph structure and then utilize GNNs to estimate solutions \cite{chen2022representing, chen2022representing2,gasse2019exact}, but these methods have not yet been applied to network interdiction problems.

\section{Proposed Framework}

This section outlines the process of transforming a network interdiction problem into input suitable for a neural network, along with elucidating the neural network's architecture. 
To ensure the model's generalization ability, in Section \ref{sec:rep}, we process the problem instances into the Mixed Integer Linear Programming (MILP) form that BB can handle, following traditional methods. Given that GNNs have shown effectiveness in learning branching strategies \cite{ijcai2021p595}, this approach simplifies the reasoning task for the GNN.
These MILP formulations are then translated into multipartite graphs, capturing essential characteristics of the scenarios as outlined in Section \ref{sec:mmilp}.
Subsequently, we propose a multipartite graph neural network, MMILP-GNN, tailored for estimating optimal interdiction decisions on the induced multipartite graphs rather than the original competitive network between the leader and follower, as elucidated in Section \ref{sec:mmilpgnn}. 
More detailed theoretical rationale is provided in Section \ref{sec:theory}.

\subsection{Preprocess Network Interdiction Instances} \label{sec:rep}

As the initial step in solving network interdiction problems, our aim is to process the problem instances using traditional methods up to the point where these methods become inadequate, leaving the challenging part to GNNs. As illustrated in Example \ref{exp:test}, we have already transformed the problem instances into a constrained optimization problem. However, this format is not suitable for neural networks due to the nested two levels of optimization. To simplify this, we employ the ``dualize-and-combine'' approach. First, we derive the dual formulation of the inner problem with a fixed leader decision variable (in Example \ref{exp:test}'s case, the variable $x$), ensuring that both players' problems align in the same optimization direction. Subsequently, we release the leader decision variable as a decision vector, converting the whole problem into a single-level MILP. 

It is noteworthy that the network interdiction problems addressed in this paper, represented by Eq. (\ref{exp:1}), involve a followers' problem that lends itself to being modeled as a convex optimization problem. This forms the basis for applying the ``dualize-and-combine" technique for single-level reduction. Although this imposes a constraint on the types of problems we can tackle, the majority of network interdiction problems commonly encountered in real-world scenarios fall within this category \cite{smith2020survey}.

Take the shortest path interdiction problem in Eq. (\ref{exp:1}) as an example, following the above-mentioned processes, the constrained bi-level optimization problem in Eq. (\ref{exp:1}) can be transformed into a single-level optimization: 
\begin{example}[Single-Level Reduction for the Shortest Path Interdiction Problem]
\begin{equation} \small
\begin{split}
    \max_{\mathbf{x} \in X}  \{ \min_{\mathbf{y}\in Y(\mathbf{x})} \sum_{(i,j)\in A} (c_{i,j}+d_{i,j}x_{i,j})y_{i,j} \}  \quad &
    \textrm{s.t.} \quad   x \in \{ 0, 1\}^{|A|} : \sum_{(i,j)\in A} x_{i,j}\leq \gamma, \quad T\textbf{y} = \textbf{b} ,\quad  y\geq 0 , \\
    \quad  \quad \quad  & \Downarrow \\
    \max_{\textbf{x}, \bm{\pi}}  \quad  \textbf{b}^T \bm{\pi} \quad &
     \textrm{s.t.} \quad   T^T \bm{\pi} \leq \textbf{c} + D\textbf{x} , 
   \quad  \textbf{x} \in \{ 0, 1\}^{|E|} : \sum_{(i,j)\in A} x_{i,j}\leq \gamma
\end{split} \label{eq:reduction}
\end{equation}
\end{example}

\subsection{Graph Representations for Network Interdiction Problems} \label{sec:mmilp}

We propose a \textit{Multipartite MILP-induced graph} or \textit{MMILP-graph} representation to encode the above single-level optimization into a form, i.e., MMILP-Graph, that is readable by GNNs while preserving all the needed information.

\begin{definition} [MMILP-graph] A multipartite MILP-induced graph is defined as a tuple $(G, H)$, where $G\equiv (W_0 \cup W_1 \cup ... W_p \cup V, E)$ consists of vertex set $W_0 \cup W_1 \cup ... W_p \cup V$ and a weighted edge set $E$, and $H$ represents vertex features.
\begin{itemize}[leftmargin=10pt]
\item \textbf{Vertices}: The vertex set can be divided into three groups including one interdiction action variable vertex group $W_0$, $m$ dual-variable vertex group, and one constraint vertex group $V$. Each of the child vertex groups has no overlap with another child vertex group. For convenience, we use $\mathit{V'}$ to represent one of the vertex groups: $\mathit{V'} = (W_0, W_1 , ... ,W_p, V)$. 
\item \textbf{Edges}: One edge in $E$ can connect one variable vertex in any of the variable vertex groups with one constraint vertex. Note that there is no edge connecting vertices in the same vertex group, or between two different vertex groups. An edge can carry edge features: $E = E^{W_0, V} \cup E^{W_1, V} \cup , ..., \cup E^{W_p, V}$ where $E^{W_k, V}$ represents the edges connecting one variable in $W_p$ and one constraint in $V$. 
\item \textbf{Vertex features}: Each vertex has its corresponding feature vector that represents characteristics in the original interdiction problem. 
\end{itemize}
Finally, an MMILP-graph is defined as $(G, H) \in \mathcal{G}$
\label{def:mmilp}
\end{definition}

\begin{example}[MMILP-Graph for a Specific Shortest Path Interdiction Instance]
Take the shortest path interdiction instance in Fig. \ref{fig:example} as an example, the induced MILP can be found in Eq. (\ref{eq:reduced_example}), and the corresponding MMILP-graph is shown in Fig. \ref{fig:mmilp_example}. 
\begin{equation} \small 
\begin{split}
& \max  (\pi_0 - \pi_9) \\ 
\textrm{s.t.} \quad & v_{1}: \pi_0-\pi_1 - x_{0,1} \leq 9, 
    v_{2}: \pi_0-\pi_4 - x_{0,4} \leq 3,  
    v_{3}: \pi_0-\pi_3 - x_{0,3} \leq 3, \\
    & v_{4}: \pi_1-\pi_2 - x_{1,2} \leq 4, 
    v_{5}: \pi_1-\pi_3 - x_{1,3} \leq 1, 
    v_{6}: \pi_4-\pi_3 - x_{4,3} \leq 2,  \\
    & v_{7}: \pi_4-\pi_5 - x_{4,5} \leq 3, 
    v_{8}:\pi_3-\pi_2 - x_{3,2} \leq 8, 
    v_{9}: \pi_3-\pi_6 - x_{3,6} \leq 4, \\
    & v_{10}: \pi_3-\pi_5 - x_{3,5} \leq 6, 
    v_{11}: \pi_2-\pi_6 - x_{2,6} \leq 5, 
    v_{12}: \pi_5-\pi_6 - x_{5,6} \leq 4, \\ 
 & v_{13}: x_{0,1} + x_{0,3} + x_{0,4} + x_{1,3}  + x_{1,2} + x_{2,6}   + x_{3,2} + x_{3,6} + x_{3,5} + x_{4,3} + x_{4,5} + x_{5,6} \leq 1  \\
\end{split}\label{eq:reduced_example}
\end{equation}

\begin{figure}
\centering
\begin{subfigure}{.5\textwidth}
  \centering
  \includegraphics[width=.8\linewidth]{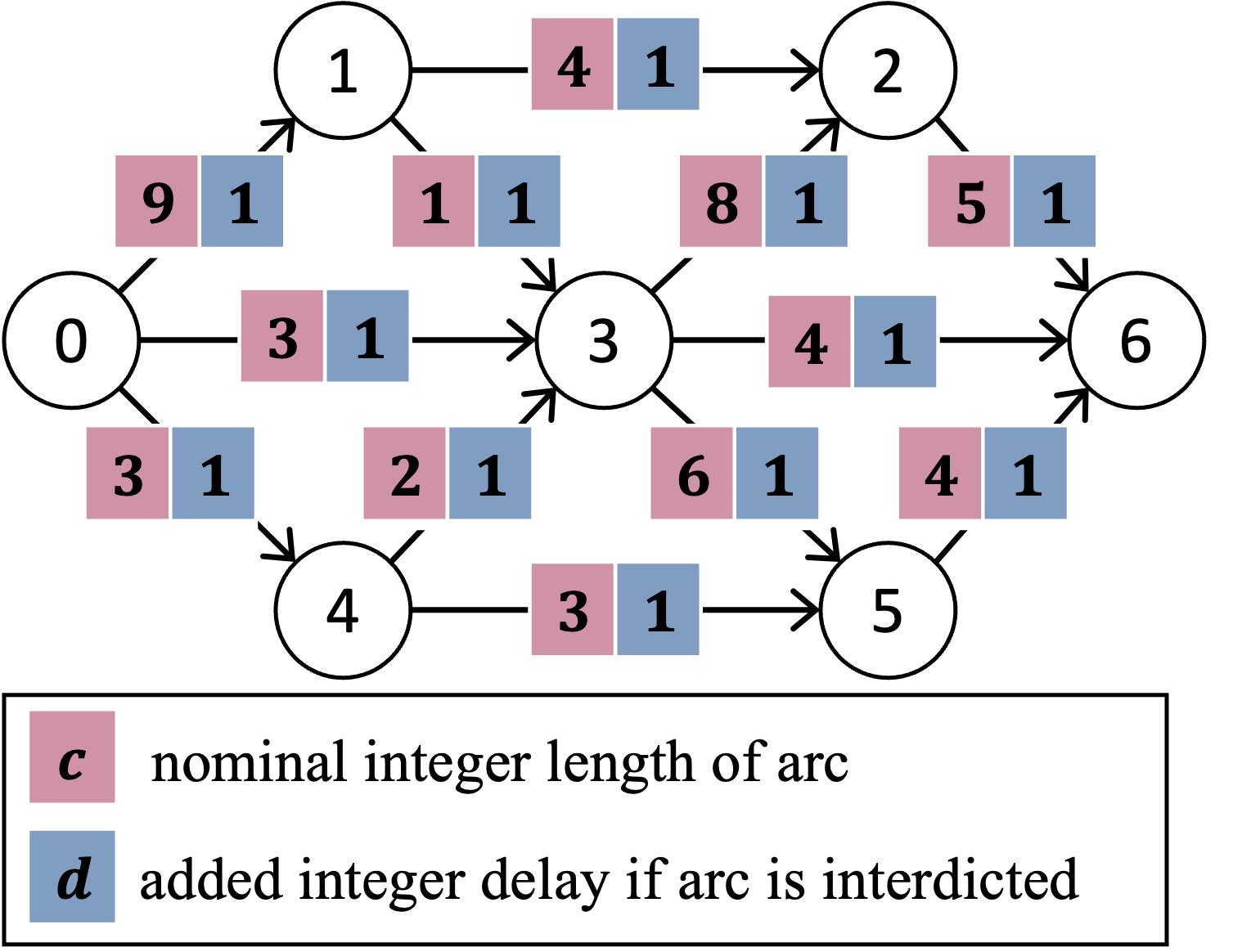}
  \caption{Example Shortest Path Interdiction Instance}
  \label{fig:example}
\end{subfigure}%
\begin{subfigure}{.5\textwidth}
  \centering
  \includegraphics[width=.7\linewidth]{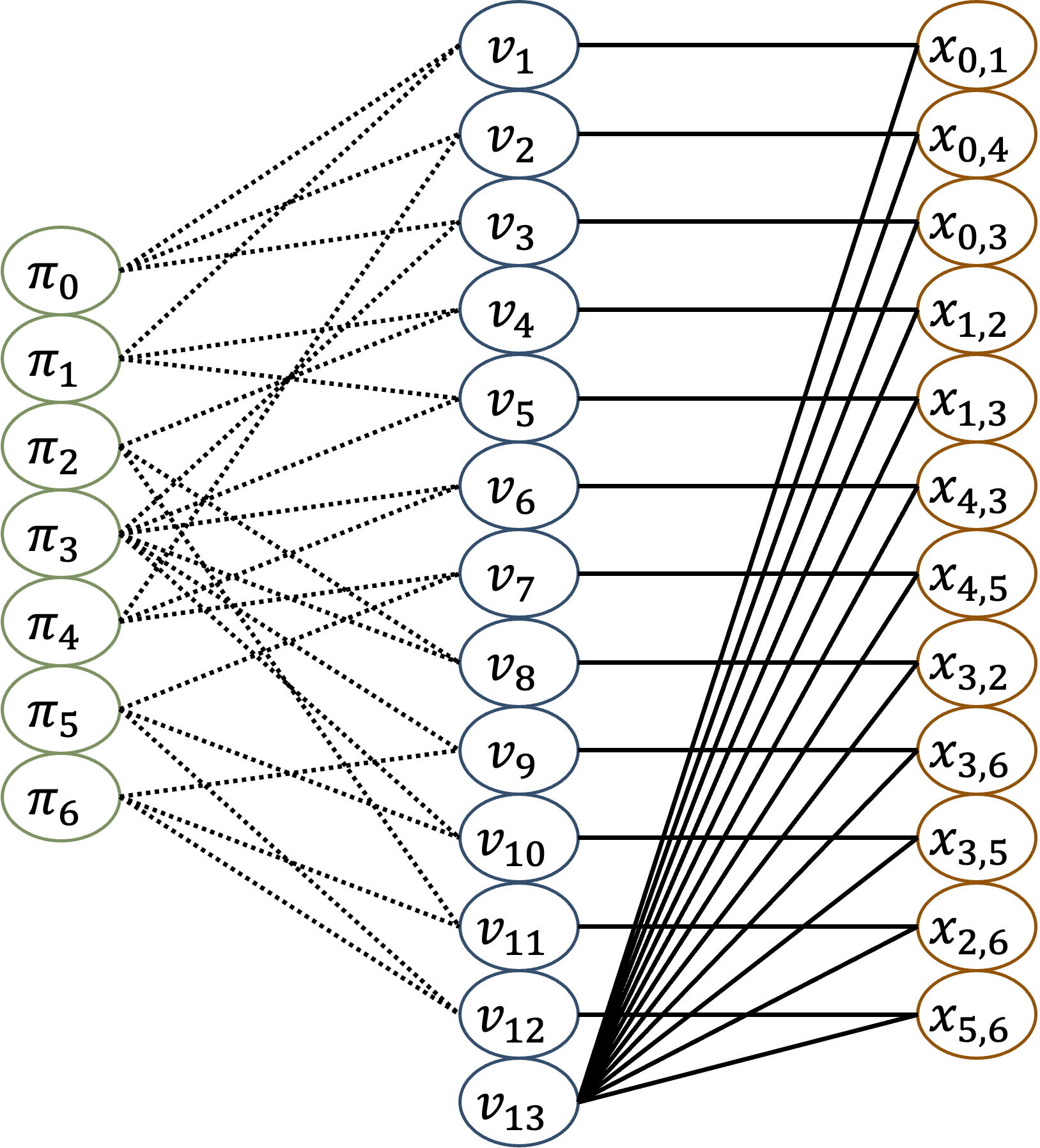}
  \caption{MMILP-graph for Eq. (\ref{eq:reduced_example})}
  \label{fig:mmilp_example}
\end{subfigure}
\caption{Network Interdiction Instance and the Corresponding MMILP-Graph}
\label{fig:example_combined}
\end{figure}
\textbf{Vertices}: In in Fig. \ref{fig:mmilp_example}, the vertices groups include one interdiction action variable group $\mathbf{x} = \{x_{0,1}, x_{0,3},...,x_{5,6}\}$ (the vertices on the right side ), only one dual-variable vertex group $\boldsymbol{\pi} = \{\pi_0, \pi_1, ..., \pi_6\}$ (the vertices on the left side), and one constraint group $\mathbf{w} = \{w_1, w_2, ..., w_{13}\}$ (the vertices in the middle). In this problem, $p$ in Definition \ref{def:mmilp} equals $1$ because there is only one dual variable $\boldsymbol{\pi}$ in the reduced form of MILP. 
    
\textbf{Edges}: One edge in Fig. \ref{fig:mmilp_example} connects one constraint vertex (in the middle column) and one variable vertex (in the left or right column), representing the relationship between a constraint ($v_i, i=1,2,...,13$) and the involved variables in Eq (\ref{eq:reduced_example}). The weight of the edge represents the coefficient of the variable in the constraint. For example, the weight of the edge between vertex $\pi_0$ and $v_1$, i.e., $E^{W_1, V}_{0,1}$, is 1 because in the constraint $v_1$, i.e., $\pi_0-\pi_1-x_{0,1} \leq 9$, the coefficient of $\pi_0$ is 1. 

\textbf{Vertex features}: The vertex features for both variable vertices and constraint vertices align with the feature definitions in the bipartite graphs in \cite{gasse2019exact, nair2020solving, han2022gnn, chen2022representing, chen2022representing2}. $h_i^{\mathit{V}'}$ represents the feature vector for the $i$-th node of vertex group $V'$.
\end{example}

\textbf{Relationship with existing work}.  This MMILP-graph representation is inspired by the use of bipartite graphs for solving general MILPs, as initially introduced by \citet{gasse2019exact} and subsequently utilized in various studies ( \cite{nair2020solving}, \cite{han2022gnn}, \cite{chen2022representing}, \cite{chen2022representing2}). 
Although we can demonstrate that the induced single-level MILPs of network interdiction instances in Section \ref{sec:rep} can be represented in the standard MILP format, the use of MMILP-graph can not only capture the heterogeneity of different variables but also learn their different degrees of importance. 
Firstly, variables in the induced problems typically are different from each other. Comparing Eq. (\ref{eq:reduction}) with a general MILP in Eq. (\ref{eq:milp}), the dual vector $\pi$ and the interdiction decision vector $x$ represent distinct variables. Incorporating them into the MILP-graph overlooks the difference between $x$ and $\pi$, resulting in reduced information. 
Secondly, different variables have varying degrees of importance. Take shortest-path interdiction as an example, our primary interest lies in the value assignments of variable $x$ because once $x$ is assigned, the problem simplifies to a regular shortest-path problem that can be solved by Dijkstra's algorithm within polynomial time. This characteristic is also called decomposability which is essential for using Benders Decomposition on network interdiction problems.

\subsection{Graph Neural Network for MMILP-Graphs} \label{sec:mmilpgnn}
Given the above-formulated MMILP-graph, here we propose a Multipartite Graph Neural Network that takes it as input to predict the solution network interdiction instance. 
The graph neural network aims to learn an $\mathcal{G} \rightarrow \mathbb{R}^n$ mapping where $n$ is the number of nodes in the network interdiction problem as well as the number of variables in vertex group $W_0$. 

In detail, because of the heterogeneity and multipartite structure of the input graph, our graph neural network has several passes. In general, we design the encoding layer for each partite of the vertices; we then propose a new message-passing strategy to learn the mapping between different groups of vertices where they should communicate; and finally, a read-out layer will produce estimates for the interdiction variable vertices, each corresponding directly to one candidate edge in the original network.

\textbf{Variable and constraint vertices encoding}: Raw vertex features are encoded into the embedding space with trainable functions $f_{in}^{\mathit{V}'}, (\mathit{V}' = (W_0, W_1, ..., W_p, V)$): 
\begin{align}
\begin{split}
h_i^{0,\mathit{V}'} = f_{in}^{\mathit{V}'}(h_i^{\mathit{V}'}), 
\end{split}\label{eq:input_layer}
\end{align}
where $h_i^{0,\mathit{V}'}$ represents the embedding (layer 0) for node $i$ in vertex group $\mathit{V}'$, and $f_{in}^{\mathit{V}'}$ represents the function for encoding the raw vertex features in group $\mathit{V}'$ into embedding space.  In practice, we append random features \cite{sato2021random} to the vertex features to enhance the separation power of our method as explained in Section \ref{sec:theory}. 

\textbf{Variables-constraints message passing}: We stack multiple graph neural network layers with variables-to-constraints message passing ($v\rightarrow c$) and constraint-to-variable message passing ($c\rightarrow v$). Since there are multiple sets/groups of variables in the MMILP-graph, the trainable functions are different as well. $v\rightarrow c$ and $c\rightarrow v$ are defined in turn in Eq (\ref{eq:v2c}) and Eq (\ref{eq:c2v}): 
\begin{equation} \small 
\begin{split}
& h_i^{l, V} = g_{l}^{V}(
                h_i^{l-1,V}, 
                \sum_{k=0}^{p} \sum_{e\in E^{W_k, V}}^{} f_l^{W_k}(h_i^{l-1,V}, h_j^{l-1,W_k}, e ),
\end{split}\label{eq:v2c}
\end{equation} 
\begin{equation} \small 
\begin{split}
& h_i^{l, W_k} = g_{l}^{W_k}(
                h_i^{l-1,W_k}, 
                 \sum_{e\in E^{W_k, V}}^{} f_l^{W_k}(h_i^{l-1,V}, h_j^{l-1,W_k}, e ), 
\end{split}\label{eq:c2v}
\end{equation}
where $h_i^{l, W_k}$ is vertex features for node $i$ within variable set $W_k$ at layer $l$, and $h_i^{l, V}$ is vertex features  for node $i$ in constraint set $V$ at layer $l$. 

\textbf{Interdiction decision read-out}: The read-out function is only applied on the interdiction decision variable vertices, i.e., $W_0$ as is demonstrated in Eq (\ref{eq:output_layer}). 
\begin{equation}
\begin{split}
& \hat{x_i} = g_{out}( h_i^{L,W_0}), 
\end{split}\label{eq:output_layer}
\end{equation}

With the definition of the detailed operations, we denote the collection of GNNs with $\mathcal{F}$: 
\begin{equation}
\begin{split}
\mathcal{F} = \{ F: \mathcal{G} \rightarrow \mathbb{R}^n | F \; \textrm{yields (\ref{eq:input_layer}), (\ref{eq:v2c}), (\ref{eq:c2v}), (\ref{eq:output_layer})} \}
\end{split}\label{eq:gnn}
\end{equation}

In practice, we use multi-linear perceptrons (MLPs) for all the trainable functions in Eq (\ref{eq:input_layer}) - (\ref{eq:output_layer}). 

Since we train the models using optimal solutions as labels, the predicted values can be interpreted as the likelihood of whether an edge should be interdicted or the budget to be allocated to that edge. This is actually more convenient than most unilateral CO problems that involve sequential decision-making. For instance, solutions to the shortest path problem must consist of a sequence of connected edges in the graph from the source to the target, whereas shortest path network interdiction does not have this requirement.

\section{Theoretical Analysis} \label{sec:theory}

\subsection{Representation Power Analysis}

The representation power of a neural network is determined by its ability to produce different outcomes for different inputs, which is fundamental to its effectiveness. Our approach to measuring the representation power of MMILP-GNN is heavily inspired by existing research on the representation power of GNNs \cite{Xu2020What} and the separation power of GNNs in LP \cite{chen2022representing} and MILP \cite{chen2022representing2}. Following the extension of the Weisfeiler-Lehman (WL) test for LP-Graphs \cite{chen2022representing} and MILP-Graphs \cite{chen2022representing2}, we further extend it for MMILP-Graphs as shown in Algorithm \ref{alg1}.
We extend these results to MMILP-GNNs for network interdiction instances and validate them in a similar manner.

\begin{algorithm*}[htb!]\scriptsize
	\caption[WL test for MMILP-Graphs]{ $\mathrm{WL}_{\mathrm{MMILP}}$}\label{alg:WL}
	\begin{algorithmic}[1]
		\Require A graph instance $(G,H)$ with $p$ dual variables, iteration limit $L>0$.
		\State  $C_i^{0,V} = \text{HASH}_{V}^{0}(h_i^V)$
            \State $C_j^{0,W_k} = \text{HASH}_{W_k}^{0}(h_j^{W_k})$ for $k \; \textrm{in} \; \{0,1,2,..., p\}$.
		\For{$l=1,2,\cdots,L$}
		\State $C_i^{l,V} = \text{HASH}_{V}^{l} \left(C_i^{l-1,V}, \sum_{k=0}^{p}\sum_{j=1}^{n_k} E_{i,j}^{W_k, V}\text{HASH}_{W_k}^{l'}\left(C_j^{l-1, W_k}\right)\right)$.
            \For{$k=0, 1,\cdots,p$}
		\State $C_j^{l,W_k} = \text{HASH}_{W_k}^{l} \left(C_j^{l-1,W_k}, \sum_{i=1}^m E_{i,j}^{W_k,V} \text{HASH}_{V,W_k}^{l'}\left(C_i^{l-1, V} \right)\right)$.
            \EndFor
		\EndFor
		\State \textbf{return} The multisets 
		containing all colors $\{\{ C_i^{L,V} \}\}_{i=0}^m$ and $ \{ \{\{ C_j^{L,W_k} \}\}_{j=0}^n \}_{k=0}^p$.
	\end{algorithmic} \label{alg1}
\end{algorithm*}

\begin{theorem}
For two MMILP-graphs $(G, H)$ and $(\hat{G}, \hat{H})$, they cannot be distinguished by $\mathrm{WL}_{\mathrm{MMILP}}$ if and only if $F(G,H) = F(\hat{G}, \hat{H}), \forall F\in \mathcal{F}$\label{theom:main}
\end{theorem}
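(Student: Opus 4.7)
The plan is to prove the biconditional by the two standard directions familiar from Xu et al. and the LP/MILP representation results of Chen et al., adapted to the multipartite setting introduced in Definition \ref{def:mmilp}. Concretely, I would establish (a) that color-equivalent instances under $\mathrm{WL}_{\mathrm{MMILP}}$ must be mapped to the same output by every $F\in\mathcal{F}$, and (b) that if $\mathrm{WL}_{\mathrm{MMILP}}$ separates the two instances then I can exhibit a particular $F\in\mathcal{F}$ that also separates them. Together these give the biconditional.

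For direction (a), I would argue by induction on the layer index $l=0,1,\dots,L$ that whenever two vertices $i,i'$ (in the same partite $V'$, possibly across the two graphs) satisfy $C_i^{l,V'}=C_{i'}^{l,V'}$, they also satisfy $h_i^{l,V'}=h_{i'}^{l,V'}$ under any choice of trainable functions in $\mathcal{F}$. The base case $l=0$ is immediate from Eq.~(\ref{eq:input_layer}) because $f_{in}^{V'}$ depends only on the raw feature, which is exactly what $\mathrm{HASH}_{V'}^{0}$ inspects. For the inductive step, I would observe that the GNN update in Eqs.~(\ref{eq:v2c})--(\ref{eq:c2v}) aggregates, for each partite $W_k$, a sum of $f_l^{W_k}(\cdot,\cdot,e)$ terms over neighbors. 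Two vertices with equal WL color at step $l$ have, by construction of Algorithm~\ref{alg1}, equal previous-layer colors and identical multisets of (neighbor color, edge weight) pairs; by the induction hypothesis the previous-layer \emph{embeddings} also coincide, so the two multisets of summands are identical, hence their sums are equal, and finally $g_l^{V'}$ being deterministic forces equal embeddings at layer $l$. Applying the same argument to the read-out $g_{out}$ on $W_0$ yields $F(G,H)=F(\hat G,\hat H)$.

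For direction (b), I would contrapose: suppose $\mathrm{WL}_{\mathrm{MMILP}}$ distinguishes $(G,H)$ and $(\hat G,\hat H)$, so at some iteration $L^\star$ the multisets of colors in some partite differ. I need to construct MLP choices so that the GNN's layerwise embeddings simulate (injective refinements of) the WL colors. This uses two standard ingredients: (i) MLPs are universal approximators on compact sets, and (ii) there exist continuous injective encodings of finite multisets of reals via $\Phi(S)=\sum_{x\in S}\phi(x)$ for suitably chosen $\phi$ (the Deep Sets / Xu et al. lemma). Applied partite-by-partite to $W_0,W_1,\dots,W_p,V$ with separate $f_l^{W_k},g_l^{V'}$, I can realize message-passing updates that are injective on the pair (previous embedding, multiset of edge-weighted neighbor embeddings). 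By induction on $l$, distinct WL colors map to distinct GNN embeddings for every $l\le L^\star$. A final read-out that retains distinctness on $W_0$ (again by universal approximation) then yields $F(G,H)\ne F(\hat G,\hat H)$.

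The main obstacle is step (b), specifically the injective multiset-encoding argument in the presence of \emph{real-valued} edge weights coming from MILP coefficients and of variable-sized neighborhoods across both graphs simultaneously. The countable-domain injectivity lemma of Xu et al.\ does not apply directly; I would instead invoke the continuous variant used by Chen et al.\ for MILP-Graphs, restricting to any compact set containing the coefficients appearing in the two instances, and choose $\phi$ and the outer map so that the composite $g_l^{V'}(h,\sum_e f_l^{W_k}(h,h',e))$ is injective on that compact set. A subtlety specific to the multipartite case is that the sum in Eq.~(\ref{eq:v2c}) is taken over all $p+1$ variable partites simultaneously: I would handle this by encoding the partite index into $f_l^{W_k}$ (e.g.\ by prepending a one-hot partite tag inside the MLP), so that contributions from different partites cannot collide after summation. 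Random features, available by assumption in Section~\ref{sec:mmilpgnn}, can be used to break any residual symmetry that is not already broken by the raw features, ensuring the injective construction succeeds on the finite pair of instances under consideration.
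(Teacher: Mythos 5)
Your overall strategy is the same as the paper's: the biconditional is split into two lemmas, direction (a) proved by induction on the layer index showing that equal WL colors force equal embeddings, and direction (b) proved contrapositively by constructing continuous per-partite message functions that injectively encode the finite set of values appearing in the two instances (mapping distinct values to linearly independent basis vectors, then a $k$-ary readout). Your remarks about restricting to the finite/compact set of coefficients of the two given instances, handling each partite $W_0,\dots,W_p,V$ with separate functions, and keeping contributions from different partites from colliding all correspond to steps the paper actually takes.

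There is, however, one step in your direction (a) that does not follow from Algorithm~\ref{alg1} as defined. You assert that two vertices with equal WL color have ``identical multisets of (neighbor color, edge weight) pairs.'' The update in Algorithm~\ref{alg1} hashes the \emph{edge-weight-weighted sum} $\sum_{k}\sum_{j} E_{i,j}^{W_k,V}\,\mathrm{HASH}'(C_j^{l-1,W_k})$, so injectivity of the outer hash plus linear independence of the inner hash values only yields equality of the \emph{total edge weight into each neighbor color class}, per partite: $\sum_{C_j^{l-1,W_k}=C} E_{i,j} = \sum_{C_j^{l-1,W_k}=C} E_{i',j}$ for every color $C$. This is strictly weaker than equality of the multisets of pairs (e.g.\ two color-$C$ neighbors with weights $1$ and $3$ versus two with weights $2$ and $2$ are WL-indistinguishable). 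Consequently the inductive step must exploit that the GNN aggregation is itself linear in the edge weight, i.e.\ of the form $\sum_j E_{i,j}\, f_l^{W_k}(h_j^{l-1,W_k})$, whose value is determined exactly by those per-class weight totals; this is how the appendix proof of Lemma~\ref{lem:sameWL2sameWGNN} proceeds. If $f_l^{W_k}$ were allowed to depend on the edge feature $e$ as an arbitrary (nonlinear) argument, as Eq.~(\ref{eq:v2c})--(\ref{eq:c2v}) literally permit, the ``WL-equivalent $\Rightarrow$ same output'' direction would fail on the example above. You should either restrict the class $\mathcal{F}$ to edge-weight-linear aggregation or strengthen the WL test to hash pairs; as written, your premise is false and the conclusion of the inductive step is unsupported. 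A secondary, minor point: direction (a) actually delivers equality of outputs only up to a permutation of the $W_0$ vertices (this is also how Lemma~\ref{lem:sameWL2sameWGNN} is stated), so you should say in what sense $F(G,H)=F(\hat G,\hat H)$ is meant.
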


The proof of Theorem \ref{theom:main} follows from Lemma \ref{lem:sameWL2sameWGNN} and \ref{lem:sameGNN2sameWL}.

\begin{lemma}
Let $(G,H), (\hat{G}, \hat{H}) \in \mathcal{G}$. if $(G,H) \sim (\hat{G}, \hat{H})$  then for any $F_W\in\mathcal{F}_{\text{GNN}}^W$, there exists a permutation $\sigma_W\in S_n$ such that $F_W(G,H) = \sigma_W( F_W(\hat{G},\hat{H}))$. \label{lem:sameWL2sameWGNN}
\end{lemma}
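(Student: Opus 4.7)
The plan is to argue by induction on the GNN layer index $l$ that the hidden representations produced by any $F_W \in \mathcal{F}_{\text{GNN}}^W$ form a (possibly strict) coarsening of the WL color partition from Algorithm \ref{alg:WL}. Concretely, I would establish the invariant: for every vertex partition $V' \in \{W_0, \ldots, W_p, V\}$ and every pair of indices $i$ (in $G$) and $\hat i$ (in $\hat G$), if $C_i^{l, V'}=\hat C_{\hat i}^{l, V'}$ then $h_i^{l,V'}=\hat h_{\hat i}^{l,V'}$. Because $(G,H)\sim(\hat G,\hat H)$ means that $\mathrm{WL}_{\mathrm{MMILP}}$ returns identical color multisets on the two graphs, this invariant at layer $L$ immediately yields, by matching equal-color vertices in $W_0$ and $\hat W_0$, a bijection $\sigma_W\in S_n$ under which the readout values coincide.

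The base case $l=0$ is immediate: the WL initialization $\mathrm{HASH}_{V'}^{0}$ and the GNN encoder $f_{in}^{V'}$ are both deterministic functions of the raw feature $h_i^{V'}$ alone (Eq. \ref{eq:input_layer}), so equal raw features give equal initial colors \emph{and} equal initial embeddings. For the inductive step I would compare Eq. \ref{eq:v2c} with line 4 of Algorithm \ref{alg:WL}: the new constraint embedding $h_i^{l,V}$ is a function of $h_i^{l-1,V}$ and of the multiset
\begin{equation*}
\mathcal{M}_i^{l,V} = \big\{\!\!\big\{\, (h_j^{l-1,W_k}, E^{W_k,V}_{i,j}, k) \;:\; k\in\{0,\dots,p\},\; (i,j)\in E^{W_k,V}\big\}\!\!\big\}.
\end{equation*}
By the inductive hypothesis applied groupwise across $k=0,\dots,p$, if $C_i^{l-1,V}=\hat C_{\hat i}^{l-1,V}$ and the analogous WL-level multiset entering $\mathrm{HASH}_V^{l}$ agrees for $i$ and $\hat i$, then $\mathcal{M}_i^{l,V}=\hat{\mathcal{M}}_{\hat i}^{l,V}$, so the aggregator and $g_l^{V}$ output the same vector. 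The same argument on Eq. \ref{eq:c2v} handles each variable group $W_k$ symmetrically. After $L$ layers, apply the final readout $g_{out}$ vertexwise on $W_0$; the bijection $\sigma_W$ extracted from the coincidence of layer-$L$ color multisets on $W_0$ and $\hat W_0$ then satisfies $F_W(G,H)=\sigma_W(F_W(\hat G,\hat H))$.

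The main technical obstacle is the multipartite bookkeeping in the inductive step: Eq. \ref{eq:v2c} aggregates contributions from \emph{all} $p+1$ variable groups simultaneously with a distinct $f_l^{W_k}$ per group, while the WL rule uses distinct $\mathrm{HASH}_{W_k}^{l'}$ per group and then sums across $k$. I must check that summation-after-per-group-transform is compatible with the coarsening relation in both directions, i.e., that equality of the grouped WL multisets is necessary and sufficient for equality of the grouped GNN aggregates; this is clean because each $k$ is disjoint and the HASH/MLP functions are deterministic. A second subtlety is the random-feature trick mentioned after Eq. \ref{eq:input_layer}: since Lemma \ref{lem:sameWL2sameWGNN} is the ``same-WL implies same-GNN'' direction, the random features must be regarded as part of the fixed input $H$ (so $\sim$ is tested on the augmented features), otherwise two WL-equivalent graphs might receive unequal random augmentations and the conclusion would fail. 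Making this convention explicit at the outset of the proof is the cleanest way to avoid the pitfall.
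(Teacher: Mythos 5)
Your overall strategy is the same as the paper's: induct on the layer index with the invariant that equal WL colors force equal GNN embeddings (within each graph and across the two graphs), then read off the permutation $\sigma_W$ from the equality of the layer-$L$ color multisets on $W_0$. The final readout step is fine, and your remark that the random features must be treated as part of the fixed input $H$ (so that $\sim$ is tested on the augmented features) is a correct and worthwhile clarification that the paper leaves implicit.

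There is, however, a genuine gap in your inductive step. You claim that agreement of the WL data entering $\mathrm{HASH}_V^{l}$ yields equality of the multisets $\mathcal{M}_i^{l,V}$ of triples $(h_j^{l-1,W_k}, E^{W_k,V}_{i,j}, k)$. But neither the WL update (line 4 of Algorithm \ref{alg:WL}) nor the GNN update (Eq.\ \ref{eq:v2c}) aggregates a multiset of neighbor--weight pairs: both form \emph{weighted sums}, namely $\sum_{k}\sum_{j} E_{i,j}^{W_k,V}\,\mathrm{HASH}_{W_k}^{l'}(C_j^{l-1,W_k})$ and $\sum_{k}\sum_{j} E_{i,j}^{W_k,V} f_l^{W_k}(\cdot)$ respectively. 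Injectivity of $\mathrm{HASH}_V^{l}$ therefore only gives equality of the two weighted sums, and --- via the linear independence of the hash values assigned to distinct colors --- equality of the \emph{per-color aggregate weights} $\sum_{C_j^{l-1,W_k}=C} E_{i,j} = \sum_{C_j^{l-1,W_k}=C} E_{\hat i,j}$ for each color $C$ and each group $k$. This is strictly weaker than equality of your multisets $\mathcal{M}$: two same-colored neighbors with weights $1$ and $3$ versus two with weights $2$ and $2$ produce identical WL colors but different multisets of (color, weight) pairs, so the step ``WL data agrees $\Rightarrow \mathcal{M}_i^{l,V}=\hat{\mathcal{M}}_{\hat i}^{l,V}$'' is unjustified and false in general. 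Fortunately the conclusion you need survives: since the inductive hypothesis makes $f_l^{W_k}(h_j^{l-1,W_k})$ constant on each color class, the GNN aggregate depends only on the per-color weight sums, which do agree. This is exactly how the paper closes the step, and you should replace the multiset-equality claim with that linear-independence/per-color-weight argument. A smaller nit: your base case argues ``equal raw features give equal colors and equal embeddings,'' whereas the invariant requires the converse direction ``equal colors imply equal embeddings,'' which needs the injectivity of $\mathrm{HASH}_{V'}^{0}$ that the paper invokes explicitly.
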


\begin{lemma}\label{lem:sameGNN2sameWL}
	Let $(G, H),(\hat{G},\hat{H})\in \mathcal{G}$. If $F(G,H) = F(\hat{G},\hat{H})$ holds for any $F\in \mathcal{F}$, then $(G,H)\sim (\hat{G},\hat{H})$. 
\end{lemma}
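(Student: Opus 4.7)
The plan is to argue the contrapositive. Assume $(G,H) \not\sim (\hat{G}, \hat{H})$, so $\mathrm{WL}_{\mathrm{MMILP}}$ returns different colour multisets for the two graphs in at least one partite. I will exhibit an explicit $F\in\mathcal{F}$ separating them, following the GIN-style simulation of colour refinement used for LP-graphs in \cite{chen2022representing} and MILP-graphs in \cite{chen2022representing2}, adapted to the $p+2$ partites of Algorithm~\ref{alg1}. Since the two graphs are fixed and finite, the alphabet of WL colours appearing at every iteration is a finite set, which lets me invoke universal approximation of MLPs to pick trainable functions that are injective on the relevant finite inputs.

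First I would induct on the layer index $l$ to show that the MLPs $f_{in}^{V'}$, $f_l^{W_k}$, $g_l^{V}$, $g_l^{W_k}$ can be chosen so that at every layer the partition induced on each group by the embeddings $h_i^{l,V'}$ refines the WL colour partition $\{C_i^{l,V'}\}_i$. For the base case, each $f_{in}^{V'}$ is chosen injective on the finite set of raw features occurring in either graph. For the inductive step, the inner MLPs $f_l^{W_k}$ in (\ref{eq:v2c})--(\ref{eq:c2v}) are taken injective on the finite set of (neighbour embedding, edge-weight) triples, so that the sum aggregation becomes an injective multiset encoding via the standard sum-plus-MLP argument, and the combinators $g_l^{V}$, $g_l^{W_k}$ are injective on the (previous embedding, aggregated sum) pair. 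Iterating up to $l=L$ yields per-vertex embeddings in every partite that are at least as fine as the corresponding WL colours produced by Algorithm~\ref{alg1}.

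For the read-out, I would choose $g_{out}$ injective on the finite set of layer-$L$ embeddings that actually occur on $W_0$, so that $\hat{x}_j$ is in one-to-one correspondence with the WL colour $C_j^{L,W_0}$. If the two graphs differ already in the returned $W_0$ multiset, the output vectors of $F$ differ as multisets and no coordinate permutation can relate them, so $F(G,H)\neq F(\hat{G},\hat{H})$, contradicting the hypothesis. If instead the distinction occurs only in the constraint multiset $\{C_i^{L,V}\}$ or in one of the non-$W_0$ variable multisets $\{C_j^{L,W_k}\}$ with $k\geq 1$, I would appeal to the fixed-point property of colour refinement: at stabilisation every vertex's colour is determined by its neighbour-colour multiset, so an asymmetry isolated in a single partite would force a corresponding asymmetry in any adjacent partite under one more round of updates, and in particular must already be present in $W_0$ via a single $c\rightarrow v$ pass, lest stability be violated.

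The main obstacle is precisely this last step, propagating an asymmetry observed at a constraint or non-$W_0$ variable vertex back into the $W_0$ partition that the read-out actually sees. This parallels the technical core of the bipartite analysis in \cite{chen2022representing2} but is complicated here by the presence of $p+1$ distinct variable groups that communicate only through the single constraint group $V$, so one has to track that information flows between any two variable groups in at most two rounds of $v\rightarrow c\rightarrow v$ message passing. A subsidiary technicality is realising the entire family of injective MLPs across all layers and all groups within a single $F\in\mathcal{F}$; finiteness of the per-layer colour alphabets makes this possible but calls for explicit bookkeeping of how the injective choices on each group are glued together.
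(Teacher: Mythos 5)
Your overall strategy coincides with the paper's: argue the contrapositive, and inductively choose the encoders, the message functions $f_l^{W_k}$ and the combinators $g_l^{V}$, $g_l^{W_k}$ so that at every layer the embedding partition on each vertex group refines the WL colour partition of Algorithm~\ref{alg1} (the paper's conditions (i)--(vi)), using one-hot images $e_p^{d_l}$ and linear independence to make the weighted-sum aggregation injective on multisets. Up to the end of layer $L$ your argument and the paper's are essentially identical.

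The genuine gap is exactly the step you flag as the ``main obstacle'' and then dispose of too quickly. You claim that, by stability of colour refinement, a multiset asymmetry located in $V$ or in some $W_k$ with $k\geq 1$ must already be visible in the multiset of $W_0$ colours after one more $c\rightarrow v$ pass. That is not a consequence of stability. Stability is a statement about each graph separately (its own partition stops refining); it does not prevent two non-equivalent graphs from having identical stable colour multisets on $W_0$ while differing on $V$ or on $W_1,\dots,W_p$. Concretely, a colour of $V$ whose multiplicity differs between the two graphs distributes its incidences across \emph{all} variable groups, and the discrepancy can be absorbed entirely by $W_1,\dots,W_p$, leaving the $W_0$ multiset --- and hence any readout that only sees $W_0$ --- unchanged. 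The paper avoids this by not routing the distinction through $W_0$ at all: it assumes without loss of generality that the difference appears in the multiset of final $V$-embeddings, chooses a continuous $\varphi$ with $\varphi(\beta_q)=k^q$ for $k$ exceeding every multiplicity, so that $\sum_{i=1}^m\varphi(h_i^{L,V})\neq\sum_{i=1}^m\varphi(\hat{h}_i^{L,V})$ by uniqueness of $k$-ary expansions, and realises this quantity through an output layer that aggregates over the partite where the difference actually lives. To repair your version you would either need to adopt such an aggregate readout, or prove a genuine propagation lemma showing that under the update rules of Algorithm~\ref{alg1} any distinguishing difference eventually appears in the $W_0$ colour multiset; the fixed-point observation you give does not establish this.
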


The proofs are detailed in Appendix \ref{appendix:proof}. Since our results align with those in \cite{chen2022representing2} for MILPs, it is possible that the network interdiction instances induce \textit{foldable} MILPs as introduced in their work. To address these cases, we enhance the representation power in our experiments by appending random features \cite{sato2021random} to the MMILP-graphs.

\subsection{Discussions on Algorithmic Alignment }

The concept of algorithmic alignment, introduced by \citet{Xu2020What}, describes the reasoning capabilities of GNNs. The core idea is that a neural network is more effective at learning to perform a reasoning task, such as solving a CO problem, when its architecture aligns well with the algorithm designed to solve that task. Specifically, GNNs have been found to align with DP, a paradigm capable of solving many CO problems. This alignment between DP and GNNs has been further examined by \citet{NEURIPS2022_8248b1de} using methods from category theory and abstract algebra. 

However, network interdiction problems fall into a category that doesn't align with DP and thus cannot be naturally addressed by GNNs.  DP relies on the principle of optimal substructure, where an optimal solution to a problem can be constructed from optimal solutions to its subproblems \cite{cormen2022introduction}. Network interdiction problems typically involve nested two-layer decision-making, which does not exhibit the overlapping subproblems characteristic. 
The way our proposed method aligns better with the exact solver can also be explained with \citet{NEURIPS2022_8248b1de}'s theories in abstract algebra: 
by representing the network interdiction instances as multipartite graphs, the MMILP-GNN and the target algorithm (BB) at least share the same finite sets in the polynomial spans.

\section{Experimental Evaluation}

In the experiments, we focus on two specific network interdiction problems: the shortest path interdiction problem and the maximum flow interdiction problem.

\textbf{Instance generation.} 
We generate four maximum flow interdiction problem sets, MFI20, MFI30, MFI100, and MFI200, consisting of 4,000 of 20, 30, 100, and 200-node maximum flow interdiction instances, respectively. Similarly, we generate three shortest-path interdiction problem sets, SPI20, SPI30, and SPI100. 
Further information on data generation is available in the Appendix \ref{appdix:data}.

\textbf{Baseline methods. } 
\begin{itemize}[leftmargin=10pt]
    \item \textbf{GCN}: The Graph Convolutional Network (GCN) is a popular GNN variant \cite{kipf2017semisupervised}.
    \item \textbf{rGIN}: The GIN model is a standard isomorphism graph neural network that operates on the original graph of the instances \cite{xu2018how}. To ensure a fair comparison, we use the version of GIN that incorporates random features, denoted as rGIN \cite{sato2021random}.
    \item \textbf{SCIP} is the most powerful non-commercial optimization software package that incorporates BB \cite{BestuzhevaEtal2021OO}. We formulate both the maximum flow interdiction problem and the shortest path interdiction problem as MILP problems and allow SCIP to solve them. The detailed configuration can be found in Appendix \ref{appendix:scip}. 
\end{itemize}

\textbf{Details on validation and testing}: 
For each of the datasets, we use 50\% of instances for training, 25\% for validation, and 25\% for testing. The MMILP-GNN produces interdiction marginal probabilities for each edge in the graph as its final output. Given the predicted $\hat{x}$ in Eq. (\ref{eq:output_layer}), we utilize two decision-making strategies to assess the model's performance.
\begin{itemize}[leftmargin=10pt]
    \item \textbf{End-to-end prediction}: For a problem with a limited interdiction budget of $k$, the end-to-end prediction strategy simply applies interdictions to the top-k edges based on the predicted marginal interdiction probability. 
    \item \textbf{Predict-and-search}: This strategy utilizes a trust-region-based algorithm to search for high-quality feasible solutions, guided by a mapping derived from the initial predictions \cite{han2023a}.  The implementation of this strategy is directly derived from existing work outlined by \citet{han2022gnn} and is detailed in Appendix \ref{appendix:strategy}. 
\end{itemize}

We employ these two decision-making strategies tailored for different purposes and comparisons. Firstly, it's important to note that the end-to-end prediction method is \textit{not} expected to outperform SCIP,  because SCIP is expected to generate \textit{optimal} solutions, while perfect generalization ability is theoretically unattainable for machine learning models (detailed in Section \ref{sec:e2e_res}). As a result, the end-to-end prediction method is only compared with other learning-based methods. 
Secondly, to facilitate a fair comparison between our learning-based method and the SCIP, we adopt the predict-and-search strategy. The rationale is that although SCIP is capable of providing the optimal solution, it typically requires a significant amount of time. Within a given timeframe, the predict-and-search strategy allows us to refine the predictions from our models. The goal is to assess whether our model can swiftly and reliably predict a smaller region than the original search space, enabling us to obtain better solutions faster than SCIP before it reaches an optimal solution.

\subsection{End-to-End Prediction Results} \label{sec:e2e_res}

For end-to-end predictions, we conduct experiments on three small datasets for both the shortest-path interdiction problem and the maximum flow interdiction problem. Table \ref{table:e2e_mfi} demonstrate that Neural Interdiction consistently outperforms the comparison models. 
The approximation ratio measures the average ratio to the optimal results: the higher, the better for max-min problems, and the lower, the better for min-max problems.
Visualization of the predictions is available in Figure \ref{fig:e2e_fm1}. It's important to acknowledge that learning-based methods, including our proposed Neural Interdiction, are not expected to match SCIP's performance because both interdiction problems are recognized as NP-hard problems, while neural network models operate in polynomial time. Hence, under the assumption that $\text{P} \neq \text{NP}$, it is clear that learning-based methods cannot approximate exact solvers.

\begin{figure}[htp] 
    \centering
    \includegraphics[width=0.6\linewidth]{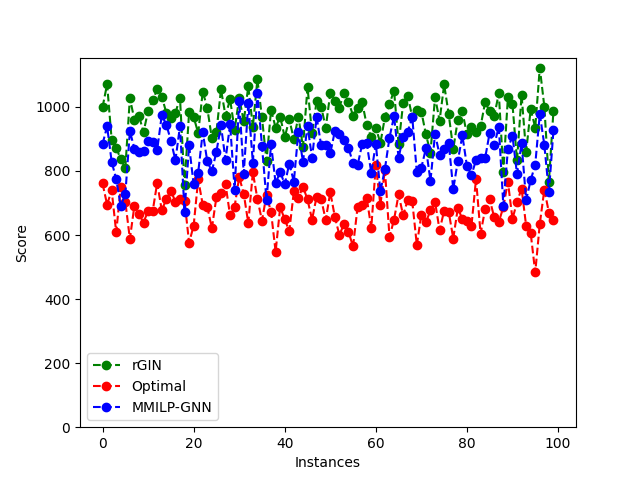}
    \caption{Example Shortest Path Interdiction Instance}
    \label{fig:e2e_fm1}
\end{figure}

\begin{table}[h]
\centering
\begin{tabular}{clll}
\hline
\textbf{Dataset}               & \textbf{Method}              & \textbf{Approximation Ratio } & \textbf{Optimality Gap} \\ \hline
\multirow{3}{*}{MFI20} & rGIN                &     $1.33 \pm 0.07 $           &     $14.2 \pm 0.92$      \\
                      & GCN                 &    $1.42\pm 0.06$       & $15.2 \pm 0.65$              \\
                      &\textbf{ Neural Interdiction} &   $\mathbf{1.22} \pm \mathbf{0.06}$    & $\textbf{8.2} \pm \textbf{0.32}$              \\ \hline
\multirow{3}{*}{MFI30} & rGIN                &   $1.43\pm 0.02$               &  $23.2 \pm 0.53$   \\
                      & GCN                   &   $1.59\pm 0.07$    &       $25.8 \pm 0.97$         \\
                      & \textbf{Neural Interdiction} &   $\mathbf{1.28}\pm \mathbf{0.02}$     &   $\textbf{9.6} \pm \textbf{0.69} $    \\ \hline
\multirow{3}{*}{MFI00} & rGIN                &   $1.48\pm 0.19$    &  $46.2 \pm 3.48$         \\
                      &  GCN                &   $1.59\pm 0.23$    &   $59.2 \pm 5.89$         \\
                      & \textbf{Neural Interdiction} &   $\mathbf{1.33}\pm \mathbf{0.17}$  &    $\textbf{38.2} \pm \textbf{3.83}$     \\ \hline
\multirow{3}{*}{SPI20} & rGIN                &  $0.75\pm0.18$ &   $19.75\pm2.56$   \\
                      & GCN                 &  $0.69\pm0.17$  & $23.25\pm3.37$   \\
                      & \textbf{Neural Interdiction }& $\textbf{0.82}\pm\textbf{0.14}$  &  $\textbf{18.25}\pm\textbf{2.73}$   \\ \hline
\multirow{3}{*}{SPI30} & rGIN                &   $0.68\pm0.25$    &   $20.5\pm2.66$    \\
                      &  GCN                   &   $0.64\pm0.29$   &  $26.25\pm5.25$  \\
                      & \textbf{Neural Interdiction} &   $\textbf{0.81}\pm\textbf{0.17}$  &  $\textbf{19.75}\pm\textbf{2.37}$  \\ \hline
\multirow{3}{*}{SPI00} & rGIN                &  $0.72\pm0.22$     &   $24.96\pm5.24$ \\
                      &  GCN                   &  $0.56\pm0.26$  & $27.75\pm6.38$   \\
                      & \textbf{Neural Interdiction} &   $\textbf{0.75}\pm\textbf{0.19}$   &   $\textbf{23.25}\pm\textbf{4.18}$  \\ \hline
\end{tabular}
\caption{Results of end-to-end predictions. The approximation ratio is the higher, the better for SPI problems, and the lower, the better for MFI problems.}
\label{table:e2e_mfi}
\end{table}
\subsection{Predict-and-Search Results}

Using the predict-and-search strategy, we initially predict interdiction decision distributions with MMILP-GNN, followed by searching for near-optimal solutions within a trust region derived from the prediction. We then assess Neural Interdiction's performance against SCIP within a limited time frame. In the case of Neural Interdiction, the inference time comprises two periods: model inference and search. Model inference, conducted via PyTorch on GPUs, is relatively negligible compared to search time. The search phase, also executed through SCIP, can be contrasted with the solving time of the original problem. 

Due to limited resources for labeling and training, we conducted experiments on 2,000 instances of 200-node maximum flow interdiction problems. Figure \ref{fig:ps_fm1} represents a comparison of solving times between the original problem solved by SCIP and the predict-and-search approach. This figure is based on SCIP logs from running both SCIP alone and SCIP combined with Neural Interdiction. In most cases, the pre-solving time is very similar, but our proposed method begins to show advantages starting from 25-35 seconds.  This illustrates that the MMILP-GNN model, trained on the training set, consistently offers reliable estimates of where the optimal solution may lie, such that the search strategy and start from there and find good solutions quicker than solving the exact problem.

\begin{figure}[htp] 
    \centering
    \includegraphics[width=0.6\linewidth]{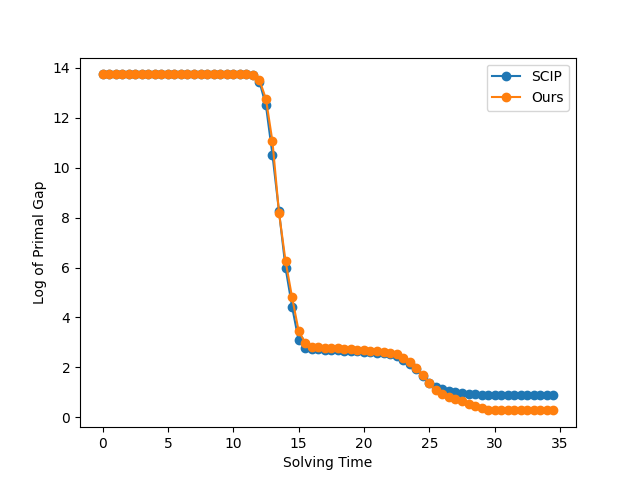}
    \caption{Averaged results visualization from 2,000 runs for each of the two methods.  }
    \label{fig:ps_fm1}
\end{figure}






\section{Conclusions and Limitations}
We introduce a GNN-based framework for solving network interdiction problems by modeling them as single-level MILPs and encoding them onto multipartite graphs. Using a multipartite GNN, we learn the mapping between the graph representation and solution distributions, allowing us to derive final discrete solutions through two strategies. To our knowledge, this is the first learning-based method for addressing network interdiction problems.  We provide theoretical evidence demonstrating the representational power of our model while also elucidating how the method aligns with conventional algorithms capable of solving such problems.
The main limitation of this work is that, although the experimental results show our model can learn from training data and perform well on several datasets, its performance and generalization ability are not yet sufficient to make it a reliable solver for network interdiction problems.

\pagebreak 
\bibliographystyle{plainnat}
\bibliography{main}

\pagebreak 

\appendix 
\section{Proofs for Section \ref{sec:theory}} \label{appendix:proof}

Our approach to measuring the representation power of MMILP-GNN is heavily inspired by existing research on the representation power of GNNs \cite{Xu2020What} and the separation power of GNNs in LP \cite{chen2022representing} and MILP \cite{chen2022representing2}. 
 We also prove Lemma~\ref{lem:sameWL2sameWGNN} and Lemma~\ref{lem:sameGNN2sameWL} in the same way corresponding lemmas and theorems are proved in \cite{chen2022representing}.

\begin{proof}[Proof of Lemma~\ref{lem:sameWL2sameWGNN}]

Define $p+1$ sets (not multisets) that collect different colors:
    \begin{equation*}
        \mathbf{C}_{l-1}^V = \{C_1^{l-1,V},\dots, C_m^{l-1,V},\hat{C}_1^{l-1,V},\dots, \hat{C}_m^{l-1,V}\},
    \end{equation*}
    and
    \begin{equation*}
        \mathbf{C}_{l-1}^{W_k} = \{C_1^{l-1,W_k},\dots, C_n^{l-1,W_k},\hat{C}_1^{l-1,W_k},\dots, \hat{C}_n^{l-1,W_k}\}.
    \end{equation*}

    We aim to prove by induction that for any $l\in\{0,1,\dots,L\}$, the followings hold:
    \begin{itemize}
    	\item[(i)] $C_i^{l,V}=C_{i'}^{l,V}$ implies $h_i^{l,V}=h_{i'}^{l,V}$, for $1\leq i,i'\leq m$;
    	\item[(ii)] $\hat{C}_i^{l,V}=\hat{C}_{i'}^{l,V}$ implies $\hat{h}_i^{l,V}=\hat{h}_{i'}^{l,V}$, for $1\leq i,i'\leq m$;
    	\item[(iii)] $C_i^{l,V}=\hat{C}_{i'}^{l,V}$ implies $h_i^{l,V}=\hat{h}_{i'}^{l,V}$, for $1\leq i,i'\leq m$;
    	\item[(iv)] $C_j^{l,W_k}=C_{j'}^{l,W_k}$ implies $h_j^{l,W_k}=h_{j'}^{l,W_k}$, for $1\leq j,j'\leq n_k$;
    	\item[(v)] $\hat{C}_j^{l,W_k}=\hat{C}_{j'}^{l,W_k}$ implies $\hat{h}_j^{l,W_k}=\hat{h}_{j'}^{l,W_k}$, for $1\leq j,j'\leq n_k$;
    	\item[(vi)] $C_j^{l,W_k}=\hat{C}_{j'}^{l,W_k}$ implies $h_j^{l,W_k}=\hat{h}_{j'}^{l,W_k}$, for $1\leq j,j'\leq n_k$.
    \end{itemize}
    The above claims (i)-(vi) are clearly true for $l=0$ due to the injectivity of $\text{HASH}_{V}^0$ and $\text{HASH}_{W_k}^0$. Now we assume that (i)-(vi) are true for some $l - 1 \in\{0,1,\dots,L-1\}$. Suppose that $C_i^{l,V}=C_{i'}^{l,V}$, i.e.,
    \begin{align*}
        & \text{HASH}_{ V}^l \left(C_i^{l-1,V},\sum_{k=0}^{p}\sum_{j=1}^{n_k} E_{i,j}^{W_k, V}\text{HASH}_{W_k}^{l'}\left(C_j^{l-1, W_k}\right) \right)  \\
        = & \text{HASH}_{V}^l \left(C_{i'}^{l-1,V}, \sum_{k=0}^{p}\sum_{j=1}^{n_k} E_{i',j}^{W_k, V}\text{HASH}_{W_k}^{l'}\left(C_j^{l-1, W_k}\right) \right),
    \end{align*}
    for some $1\leq i,i'\leq m$. It follows from the injectivity of $\text{HASH}_{V}^l$ that
    \begin{equation}\label{equal_colorl-1}
        C_i^{l-1,V}=C_{i'}^{l-1,V},
    \end{equation}
    and
    \begin{align*}
        \sum_{k=0}^{p}\sum_{j=1}^{n_k} E_{i,j}^{W_k, V}\text{HASH}_{W_k}^{l'}\left(C_j^{l-1, W_k}\right) 
        = \sum_{k=0}^{p}\sum_{j=1}^{n_k} E_{i',j}^{W_k, V}\text{HASH}_{W_k}^{l'}\left(C_j^{l-1, W_k}\right).
    \end{align*}
    According to the linearly independent property of $\text{HASH}_{W_k}^{l'}$, the above equation implies that
    \begin{equation}\label{equal_weightsum}
        \sum_{C_j^{l-1, W_k} = C} E_{i,j} = \sum_{C_j^{l-1, W_k} = C} E_{i',j},\quad \forall~C\in \mathbf{C}_{l-1}^{W_k}.
    \end{equation}
    Note that the induction assumption guarantees that $h_j^{l-1,W_k} = h_{j'}^{l-1,W_k}$ as long as $C_j^{l-1,W_k} = C_{j'}^{l-1,W_k}$. So one can assign for each $C\in \mathbf{C}_{l-1}^{W_k}$ some $h(C)\in \mathbb{R}^{d_{l-1}}$ such that $h_j^{l-1,W_k} = h(C)$ as long as $C_j^{l-1,W_k} = C$ for any $1\leq j\leq n_k$. Therefore, it follows from \eqref{equal_weightsum} that
    \begin{align*}
       &  \sum_{j=1}^n E_{i,j}f_l^{W_k} (h_j^{l-1, W_k}) =  \sum_{C\in \mathbf{C}_{l-1}^{W_k}} \sum_{C_j^{l-1, {W_k}} = C} E_{i,j} f_l^{W_k} (h(C)) \\
        & \sum_{j=1}^n E_{i',j}f_l^{W_k} (h_j^{l-1, {W_k}}) =  \sum_{C\in \mathbf{C}_{l-1}^{W_k}} \sum_{C_j^{l-1, {W_k}} = C} E_{i',j} f_l^{W_k} (h(C)) \\
       &  \sum_{j=1}^n E_{i,j}f_l^{W_k} (h_j^{l-1, {W_k}}) = \sum_{j=1}^n E_{i',j}f_l^{W_k} (h_j^{l-1, {W_k}})
    \end{align*}
    
    Note also that \eqref{equal_colorl-1} and the induction assumption lead to $h_i^{l-1,V}=h_{i'}^{l-1,V}$. Then one can conclude that
    \begin{align*}
        h_i^{l,V} = & g_l^V\left( h_i^{l-1,V},\sum_{k=0}^{p} \sum_{j=1}^{n_k} E_{i,j}^{W_k, V} f_l^{W_k}(h_j^{l-1, W_k}) \right) \\ 
        = & g_l^V\left( h_{i'}^{l-1,V}, \sum_{k=0}^{p} \sum_{j=1}^{n_k} E_{i',j}^{W_k, V} f_l^{W_k}(h_j^{l-1, W_k}) \right)  = h_{i'}^{l,V}.
    \end{align*}
    This proves the claim (i) for $l$. The other five claims can be proved using similar arguments.
    
	Therefore, we obtain from $(G,H)\sim(\hat{G},\hat{H})$ that
	\begin{equation*}
	    \left\{\left\{h_1^{L,V}, h_2^{L,V},\dots, h_m^{L,V}\right\}\right\} = \left\{\left\{\hat{h}_1^{L,V}, \hat{h}_2^{L,V},\dots, \hat{h}_m^{L,V}\right\}\right\},
	\end{equation*}
	and that 
	\begin{align*}
	     & \left\{\left\{h_1^{L,W_k}, h_2^{L,W_k},\dots, h_n^{L,W_k}\right\}\right\}  = \left\{\left\{\hat{h}_1^{L,W_k}, \hat{h}_2^{L,W_k},\dots, \hat{h}_n^{L,W_k}\right\}\right\}.
	\end{align*}
	By the definition of the output layer, the above conclusion guarantees that $F(G,H) = \sigma(F(\hat{G},\hat{H}))$ for some $\sigma\in S_n$.
\end{proof}

\begin{proof}[Proof of Lemma~\ref{lem:sameGNN2sameWL}]
	It suffices to prove that, if $(G,H)$ can be distinguished from $(\hat{G},\hat{H})$ by the WL test, then there exists $F\in \mathcal{F}$, such that $F(G,H)\neq F(\hat{G},\hat{H})$. The distinguish-ability of the WL test implies that there exists $L\in\mathbb{N}$ and hash functions, $\text{HASH}_{0,V}$, $\text{HASH}_{0,W}$, $\text{HASH}_{l,V}$, $\text{HASH}_{l,W}$, $\text{HASH}_{l,V}'$, and $\text{HASH}_{l,W}'$, for $l=1,2,\dots, L$, such that
 
	\begin{align}\label{diff_WL_V}
		& \left\{\left\{C_1^{L,V},C_2^{L,V},\dots, C_m^{L,V}\right\}\right\}  
  \neq  \left\{\left\{\hat{C}_1^{L,V},\hat{C}_2^{L,V},\dots, \hat{C}_m^{L,V}\right\}\right\},
	\end{align}
 
    or
    
    \begin{align}\label{diff_WL_W}
    	&\left\{\left\{C_1^{L,W},C_2^{L,W},\dots, C_n^{L,W}\right\}\right\} 
     \neq   \left\{\left\{\hat{C}_1^{L,W},\hat{C}_2^{L,W},\dots, \hat{C}_n^{L,W}\right\}\right\},
    \end{align}
    
    We aim to construct some GNNs such that the followings hold for any $l=0,1,\dots,L$:
    \begin{itemize}
    	\item[(i)] $h_i^{l,V}=h_{i'}^{l,V}$ implies $C_i^{l,V}=C_{i'}^{l,V}$, for $1\leq i,i'\leq m$;
    	\item[(ii)] $\hat{h}_i^{l,V}=\hat{h}_{i'}^{l,V}$ implies $\hat{C}_i^{l,V}=\hat{C}_{i'}^{l,V}$, for $1\leq i,i'\leq m$;
    	\item[(iii)] $h_i^{l,V}=\hat{h}_{i'}^{l,V}$ implies $C_i^{l,V}=\hat{C}_{i'}^{l,V}$, for $1\leq i,i'\leq m$;
    	\item[(iv)] $h_j^{l,W_k}=h_{j'}^{l,W_k}$ implies $C_j^{l,W_k}=C_{j'}^{l,W_k}$, for $1\leq j,j'\leq n$;
    	\item[(v)] $\hat{h}_j^{l,W_k}=\hat{h}_{j'}^{l,W_k}$ implies $\hat{C}_j^{l,W_k}=\hat{C}_{j'}^{l,W_k}$, for $1\leq j,j'\leq n$;
    	\item[(vi)] $h_j^{l,W_k}=\hat{h}_{j'}^{l,W_k}$ implies $C_j^{l,W_k}=\hat{C}_{j'}^{l,W_k}$, for $1\leq j,j'\leq n$.
    \end{itemize}
    It is clear that the above conditions (i)-(vi) hold for $l=0$ as long as we choose $f^{V'}_{\mathrm{in}}$ that is injective on the following $p+1$ sets (not multisets) respectively:  $\{h_1^V,\dots,h_m^V,\hat{h}_1^V,\dots,\hat{h}_m^V\}$ and $\{h_1^{W_k},\dots,h_n^{W_k},\hat{h}_1^{W_k},\dots,\hat{h}_n^{W_k}\}.$~ 
    We then assume that (i)-(vi) hold for some $0\leq l-1<L$, and show that these conditions are also satisfied for $l$ if we choose $f_l^V,f_l^W,g_i^V,g_l^W$ properly. Let us consider the set (not multiset):
    \begin{equation*}
    	\left\{\alpha_1,\alpha_2,\dots,\alpha_s\right\}\subset \mathbb{R}^{d_{l-1}} 
    \end{equation*}
    that collects all different values in $h_1^{l-1,W}, h_2^{l-1,W},\dots, h_n^{l-1,W},\hat{h}_1^{l-1,W}, \hat{h}_2^{l-1,W},\dots, \hat{h}_n^{l-1,W}$. Let $d_{l} \geq s$ and let $e_p^{d_l}  = (0,\dots,0,1,0,\dots,0)$ be the vector in $\mathbb{R}^{d_l}$ with the $p$-th entry being $1$ and all other entries being $0$, for $1\leq p\leq s$. Choose $f_l^W:\mathbb{R}^{d_{l-1}}\rightarrow \mathbb{R}^{d_{l}}$ as a continuous function satisfying $f_l^W(\alpha_p) = e_p^{d_l}$, $p=1,2,\dots,s$, and choose $g_l^V:\mathbb{R}^{d_{l-1}}\times \mathbb{R}^{d_{l}} \rightarrow \mathbb{R}^{d_{l}}$ that is continuous and is injective when restricted on the set (not multiset)
    \begin{multline*}
        \left\{\left( h_i^{l-1,V},\sum_{j=1}^n E_{i,j} f_l^W(h_j^{l-1, W}) \right):1\leq i\leq m\right\} 
        \cup \left\{\left( \hat{h}_i^{l-1,V},\sum_{j=1}^n \hat{E}_{i,j} f_l^W(\hat{h}_j^{l-1, W}) \right):1\leq i\leq m\right\}.
    \end{multline*}
    Noticing that
    \begin{equation*}
        \sum_{j=1}^n E_{i,j} f_l^W(h_j^{l-1,W}) = \sum_{p=1}^s\left( \sum_{h_j^{l-1,W} = \alpha_p} E_{i,j}\right) e_p^{d_l},
    \end{equation*}
    and that $\{e_1^{d_l},e_2^{d_l},\dots,e_s^{d_l}\}$ is linearly independent, one can conclude that $h_i^{l,V} = h_{i'}^{l,V}$ if and only if $h_i^{l-1,V} = h_{i'}^{l-1,V}$ and $\sum_{j=1}^n E_{i,j} f_l^W(h_j^{l-1,W})=\sum_{j=1}^n E_{i',j} f_l^W(h_j^{l-1,W})$, where the second condition is equivalent to 
    \begin{equation*}
        \sum_{h_j^{l-1,W} = \alpha_p} E_{i,j} = \sum_{h_j^{l-1,W} = \alpha_p} E_{i',j},\quad\forall~p \in\{1,2,\dots,s\}.
    \end{equation*}
    This, as well as the condition (iv) for $l-1$, implies that
    \begin{equation*}
        \sum_{j=1}^n E_{i,j}\text{HASH}_{l,W_k}'\left(C_j^{l-1, W}\right) = \sum_{j=1}^n E_{i',j}\text{HASH}_{l,W_k}'\left(C_j^{l-1, W}\right),
    \end{equation*}
    and hence that $C_i^{l,V} = C_{i'}^{l,V}$ by using $h_i^{l-1,V} = h_i^{l-1,V}$ and condition (i) for $l-1$. Therefore, we know that (i) is satisfied for $l$, and one can show (ii) and (iii) for $l$ using similar arguments by taking $d_{l}$ large enough. In addition, $f_l^V$ and $g_l^W$ can also be chosen in a similar way such that (iv)-(vi) are satisfied for $l$.
    
    Combining \eqref{diff_WL_V}, \eqref{diff_WL_W}, and condition (i)-(iv) for $L$, we obtain that
    \begin{equation}\label{diff_GNN_V}
    	\left\{\left\{h_1^{L,V},h_2^{L,V},\dots, h_m^{L,V}\right\}\right\} \neq \left\{\left\{\hat{h}_1^{L,V},\hat{h}_2^{L,V},\dots, \hat{h}_m^{L,V}\right\}\right\},
    \end{equation}
    or
    \begin{equation*}
    	\left\{\left\{h_1^{l,W_k},h_2^{l,W_k},\dots, h_n^{l,W_k}\right\}\right\} \neq \left\{\left\{\hat{h}_1^{l,W_k},\hat{h}_2^{l,W_k},\dots, \hat{h}_n^{l,W_k}\right\}\right\}.
    \end{equation*}
    Without loss of generality, we can assume that \eqref{diff_GNN_V} holds. 
    
    Consider the set (not multiset)
    \begin{equation*}
        \{\beta_1,\beta_2,\dots,\beta_t\}\subset \mathbb{R}^{d_{L}}, 
    \end{equation*}
    that collects all different values in $h_1^{L,V},h_2^{L,V},\dots, h_m^{L,V}, \hat{h}_1^{L,V},\hat{h}_2^{L,V},\dots, \hat{h}_m^{L,V}$. Let $k>1$ be a positive integer that is greater than the maximal multiplicity of an element in the multisets $\{\{h_1^{L,V},h_2^{L,V},\dots, h_m^{L,V}\}\}$ and $\{\{\hat{h}_1^{L,V},\hat{h}_2^{L,V},\dots, \hat{h}_m^{L,V}\}\}$. There exists a continuous function $\varphi: \mathbb{R}^{d_{L}} \rightarrow\mathbb{R}$ such that $\varphi(\beta_q) = k^q$ for $q=1,2,\dots,t$, and due to \eqref{diff_GNN_V} and the fact that the way of writing an integer as $k$-ary expression is unique, it hence holds that
    \begin{equation*}
        \sum_{i=1}^m \varphi(h_i^{L,V}) \neq \sum_{i=1}^m \varphi(\hat{h}_i^{L,V}).
    \end{equation*}
    
    Set the dimension of $(L+1)$-th layer as $1$: $d_{L+1} = 1$, 
    \begin{align*}
        & f_\text{out}\left(\sum_{i=1}^m h_i^{L+1,V},\sum_{j=1}^n h_j^{L+1,W}\right) = \sum_{i=1}^m \varphi(h_i^{L,V})  \neq \sum_{i=1}^m \varphi(\hat{h}_i^{L,V}) = f_\text{out}\left(\sum_{i=1}^m \hat{h}_i^{L+1,V},\sum_{j=1}^n \hat{h}_j^{L+1,W}\right),
    \end{align*}
    which guarantees the existence of $F\in\mathcal{F}_{\text{GNN}}$ that has $L+1$ layers and satisfies $F(G,H)\neq F(\hat{G},\hat{H})$.
\end{proof}

\section{Data Generation} \label{appdix:data}

In this study, without loss of generality, we concentrate on two specific network interdiction problems: the shortest path interdiction problem and the maximum flow interdiction problem. For simplicity, we adopt most of the hyperparameter settings from the project by \citet{githubGitHubLucaEliasSchaeferGurobiModels}. The budget constraint for each of the dataset is set to make the 

For the shortest path network interdiction problem, after preprocessing, we solve the problem defined in Eq (\ref{eq:reduction}). For each instance, we generate a fully connected directed graph and assign cost and capacity to each edge. The source node is set as the first node, and the sink node is set as the $N_v$-th node in the network, where $N_v$ is the total number of nodes. The cost on each edge is sampled uniformly between $1.0$ and $10.0$, and the capacity is sampled uniformly between $20.0$ and $70.0$. The budget constraint for three synthetic datasets (SPI20, SPI30, and SPI 100) are all set to $15$, $15$, and $30$, respectively.

Similarly, for the maximum flow network interdiction problem, after preprocessing, we solve the problem in Eq (\ref{exp:2}). For each instance, we generate a fully connected directed graph and assign cost and capacity to each edge. The source node is set as the first node, and the sink node is set as the $N_v$-th node in the network, where $N_v$ is the total number of nodes. The cost on each edge is sampled uniformly between $1.0$ and $10.0$, and the capacity is sampled uniformly between $10.0$ and $60.0$. The budget constraint for all three synthetic datasets (MFI20, MFI30, and MFI 100) is all set to be $15$.

\begin{align}
\begin{split}
      \min_{\alpha, \beta, \gamma} \quad & \sum_{ (i,j) \in A} u_{i,j} \beta_{i,j}, \\
      \textrm{s.t.} \quad & \alpha_i - \alpha_j + \beta_{i,j} + \gamma_{i,j}  \geq 0, \quad \forall(i,j) \in A, \quad \alpha_t - \alpha_s \geq 1, \\
      & \sum_{(i,j)\in A} r_{i,j}\gamma_{i,j} \leq R, \quad \alpha_i \in \{0,1\}, \quad \forall i \in N, \quad \beta_{i,j}, \gamma_{i,j} \in \{0,1\}, \quad \forall(i,j) \in A. 
\end{split}
\label{exp:2}
\end{align}

\section{The Predict-and-Search Strategy} \label{appendix:strategy}

The predict-and-search strategy, as employed in the work of \citet{han2022gnn}, involved predicting solution distributions and subsequently conducting a search for near-optimal solutions within a trust region derived from the predictions. The algorithm detailing this approach is outlined in Algorithm \ref{alg:search}.

\begin{algorithm}[htbp]
	\caption{Predict-and-search Algorithm}
	\textbf{Parameter}: Size $\left\{k_0,k_1\right\}$, radius of the neighborhood: $\Delta$\\
	\textbf{Input}: Instance $M$, Probability prediction $F_{\theta}\left(M\right)$\\
	\textbf{Output}: Solution $x \in \mathbb{R}^n$
	\begin{algorithmic}[1] 
		\State Sort the components in $F_{\theta}\left(M\right)$ from smallest to largest to obtain sets $I_0$ and $I_1$.
		\For{$d=1:n$}
		\If {$d\in I_0\cup I_1$}
		\State \textbf{create binary variable} $\delta_{d}$
		\If {$d\in I_0$}
		\State \textbf{create constraint}\\ $x_{d} \leq \delta_{d}$
		\Else
		\State \textbf{create constraint}\\ $1-x_{d} \leq \delta_{d}$
		\EndIf
		\EndIf
		\EndFor
		\State \textbf{create constraint} $\underset{d\in I_0\cup I_1}{\sum}\delta_{d} \leq \Delta$
		\State Let $M^{\prime}$ denote the instance $M$ with new constraints and variables
		\State Let $x=SOLVE\left(M^{\prime}\right)$
		\State \textbf{return} $x$
	\end{algorithmic}
	\label{alg:search}
\end{algorithm}

\section{SCIP Parameter Settings} \label{appendix:scip}

The parameter settings for the SCIP solver are shown in Table \ref{table:scip}. 

\begin{table}[htbp]\centering
\begin{tabular}{ll}
\hline
Parameter                     & Value      \\
\hline
randomization/randomseedshift & 0          \\
randomization/lpseed          & 0          \\
randomization/permutationseed & 0          \\
separating/maxrounds          & 0          \\
presolving/maxrestarts        & 0          \\
Heuristics                    & AGGRESSIVE \\
\hline
\end{tabular}
\caption{SCIP solver parameter settings .}
\label{table:scip}
\end{table}

\section{Other Experiment Settings}

The implementation of our MMILP-GNN is modified from the branching-imitation module in the ecole package with Pytorch \cite{prouvost2020ecole}. We use ADAM as the optimizer with a 0.0001 learning rate. All experiments were conducted on a desktop with one Intel 12900K CPU and one Nvidia Tesla V100 GPU, with 64GB RAM.

\end{document}